\title{Expected Runtime Comparisons Between \\Breadth-First Search and Constant-Depth Restarting Random Walks}
\author{
    Daniel Platnick and Richard Valenzano
}
\newtheorem{theorem}{Theorem}[section]
\newtheorem{corollary}{Corollary}[section]
\newcommand{\expect}{\mathbb{E}}
\newcommand{\prob}{\mathbb{P}}
\newcommand{\tuple}[1]{\langle{#1}\rangle}
\newcommand{\goalset}{V_{\mathcal{G}}}
\newcommand{\initvertex}{v_{\mathcal{I}}}
\newcommand{\goallevel}{d^*}
\newcommand{\goallevelsize}{N_{\goallevel}}
\newcommand{\numgoals}{g}
\newcommand{\bfactor}{b}
\newcommand{\bfsrv}{B}
\newcommand{\rrwrv}{R}
\begin{document}

\pagestyle{plain} 
\maketitle

\maketitle

\begin{abstract}
When greedy search algorithms encounter a local minima or plateau, the search typically devolves into a breadth-first search (BrFS), or a local search technique is used in an attempt to find a way out.
In this work, we formally analyze the performance of BrFS and constant-depth restarting random walks (RRW) --- two methods often used for finding exits to a plateau/local minima --- to better understand when each is best suited.
In particular, we formally derive the expected runtime for BrFS in the case of a uniformly distributed set of goals at a given goal depth.
We then prove RRW will be faster than BrFS on trees if there are enough goals at that goal depth. 
We refer to this threshold as the crossover point.
Our bound shows that the crossover point grows linearly with the branching factor of the tree, the goal depth, and the error in the random walk depth, while the size of the tree grows exponentially in branching factor and goal depth.
Finally, we discuss the practical implications and applicability of this bound.
\end{abstract}

\section{Introduction}

Greedy algorithms like \emph{Greedy Best First Search (GBFS)} \cite{Doran:gbfs} and \emph{Enhanced Hill-Climbing (EHC)} \cite{h:01} have been shown to be very effective at solving graph search and planning problems.
However, when using a flawed heuristic function, the search performed by these algorithms often devolves into \emph{breadth-first search (BrFS)}.
This is explicitly by design in the case of EHC, which performs a sequence of BrFS instances, each in search of a new lower heuristic value.
In GBFS, it often happens as a natural consequence of \emph{plateaus}, where the heuristic function does not provide enough information to distinguish between states, and so the search is blind.

The \emph{restarting random walk (RRW)} approach \cite{nakhost:arvand,xie:local_rws,xie:improved_local} is an alternative to BrFS for escaping local minima.
This method generates a sequence of random walks until one is found that exits the local minima.
It is low memory, but does not have the strong termination guarantees that the systematic search of BrFS leads to. 

The main objective of this work is to further our understanding of BrFS and RRW to better understand when each is best for dealing with local minima and plateaus, when these are encountered as part of other search algorithms. 
As an example of why this is needed, we have performed preliminary experiments testing the performance of an EHC algorithm which replaces BrFS with RRWs. These experiments were performed using Pyperplan \cite{alkhazraji:pyperplan}.
In particular, we ran each method 10 times per problem, each time with a 10 minute time limit on an M3 Macbook with 16 GB of RAM\footnote{While Pyperplan is not a state-of-the-art planner, we consider it sufficient for demonstrating that the competing methods of using BrFS or RRW for escaping local minima each are best suited for different problems.}.
The results show that different methods for escaping local minima seem best suited for different domains.
Using RRWs in EHC instead of BrFS increases coverage in some domains, for example, from 60\% to 95\% in TPP, and 81\% to 91\% in Transport. In other domains, we see that using RRW decreases performance, such as the drop in coverage from 93\% to 76\% in Scanalyzer and 100\% to 84\% in BlocksWorld. 

In this paper, we focus on formally analyzing BrFS and RRW to better understand where each approach is best suited.
Our main result shows that the expected runtime of RRW on directed trees is less than BrFS if the number of goals (\textit{ie.} exit states of the local minima) is high enough.
Interestingly, the number of goals at this \emph{crossover point} is linear in the goal depth and branching factor, meaning it grows much slower than the overall size of the tree itself.
A consequence of this is that the density of the number of goals (\textit{ie.} the percentage of goals to total states) needed for RRW to outperform BrFS actually decreases as the depth of the goals increases.
We then conclude by discussing the practical implications of this result as it relates to when each algorithm is most appropriate.

\section{Preliminaries}

In this section, we introduce the terminology used and briefly summarize the algorithms covered in this work. 

A graph search task is defined by the tuple $\mathcal{S} = \tuple{G, \initvertex, \goalset}$, where  $G = \tuple{V, E}$ is a graph with vertices $V$ and edges $E$, $\initvertex \in V$ is the \emph{start} or \emph{initial} vertex and $\goalset \subseteq V$ is the \emph{goal} set.
The objective is to find a \emph{path} of vertices $\tuple{v_0, ..., v_k}$ where $v_0 = \initvertex$, $v_k \in \goalset$, and for every $0\leq i < k$, $\tuple{v_i, v_{i+1}} \in E$.
Given our specific focus algorithms, we ignore edge costs in this work.

For any $v \in V$, we refer to the number of edges in the shortest path to $v$ as the \emph{level} of $v$.
For example, $\initvertex$ has a level of $0$, any vertex that can be reached from $\initvertex$ by one edge has a level of $1$, etc. 
The \emph{goal level} $\goallevel$ is defined as the minimum level of any vertex in $\goalset$.

The search algorithms we focus on perform two main operations on vertices.
The first is a \emph{goal test} (\textit{ie.} if $v$ is in $\goalset$).
The second is a \emph{successor generation step}.
This means finding and generating the successor set of $v$, defined as $\lbrace v' | \tuple{v, v'} \in E \rbrace$.
When both of these operations are performed on a vertex, we say it has been \emph{expanded}.
As will be described below, for both our focus algorithms, the number of goal tests will be exactly $1$ larger than the number of successor generation steps.
Thus, we can measure runtime equally well using the count of the number of times either operation is performed.
We therefore focus on the number of goal tests in the remainder of this work.

Note that while our formal analysis is given in terms of finding a goal in a given graph search task, the analysis equally applies to the case of escaping from a single local minima in a given state-space.
In such a situation, the goal test involves simply checking if the heuristic value of a given state (\textit{ie.} vertex) is less than the heuristic value of the initial state.
Thus, the statements below also provide expected runtime results for the time needed to escape from local minima of different sizes.

\subsection{Breadth-First Search (BrFS)}

For BrFS, we assume the reader's familiarity with the use of open and closed lists in best-first search algorithms like A$^*$.
At every iteration, BrFS will select one of the vertices $v$ in the open list with the lowest level and perform a goal test.
If the test succeeds, the search immediately terminates.
If the test does not succeed, then the successors of $v$ are generated, those that have been generated for the first time are added to the open list, and $v$ is moved to the closed list.
This process then repeats until a goal test succeeds or the open list is emptied.
Thus, when we refer to BrFS, we are considering an algorithm equivalent to uniform-cost search \cite{felner:uniform_cost_search} --- or A$^*$ with a heuristic  that always returns $0$ --- on a unit-cost graph.

By definition, BrFS expands all vertices in a level before proceeding to the next level (assuming no goal is found), and BrFS does not perform a goal test on any vertex with a level larger than $\goallevel$.
Since every failed goal test is immediately followed by a successor generation step and BrFS terminates after the test succeeds once, the number of goal tests is exactly one more than the number of successor generation steps.
Moreover, the number of goal tests will depend on how many vertices in the goal level are expanded before a goal vertex, which will depend entirely on how the goal vertices are distributed at that level and tiebreaking.
Thus, the number of goal tests made by BrFS on a search task $\mathcal{S}$ is a random variable, which we denote by $\bfsrv(\mathcal{S})$.

\begin{algorithm}[t]
\caption{The Constant-Depth RRW Algorithm}\label{alg:rrw}
\hspace*{\algorithmicindent} \textbf{Input: } task $\mathcal{S} = \tuple{G = \tuple{V, E}, \initvertex, \goalset}$, max depth $t$

\begin{algorithmic}[1]
    \IF{$\initvertex \in \goalset$} \label{rrw_alg:init_goal_test}
        \STATE \textbf{return} path $\tuple{\initvertex}$
    \ENDIF
    \WHILE{\textbf{true}}
        \STATE  $P \gets \tuple{\initvertex}$, $v \gets \initvertex$, $d \gets 0$ \label{rrw_alg:rw_start}
        \WHILE{$d < t$} \label{rrw_alg:restart}
            \STATE $v \gets$ vertex sampled from $\lbrace v' | \tuple{v, v'} \in E \rbrace$ \label{rrw_alg:successor_gen}
            \STATE Append $v$ to $P$, $d += 1$ 
            \IF{$v \in \goalset$} \label{rrw_alg:goal_test}
                \STATE \textbf{return} $P$ \label{rrw_alg:rw_end}
            \ENDIF
        \ENDWHILE
    \ENDWHILE
\end{algorithmic}
\end{algorithm}

\subsection{Constant-Depth Restarting Random Walks (RRW)}

The constant-depth RRW algorithm is shown in Algorithm \ref{alg:rrw}.
It generates a series of random paths called \emph{random walks}, each starting from $\initvertex$ (lines \ref{rrw_alg:rw_start} to \ref{rrw_alg:rw_end}).
At every step of the walk, the algorithm terminates if a goal is found (line \ref{rrw_alg:goal_test}).
Otherwise, the successor of the last vertex is generated, and it is added to the path.
The algorithm restarts to the initial vertex when the maximum depth is reached (lines \ref{rrw_alg:restart} and \ref{rrw_alg:rw_start}).

Notice that the goal test is performed on $\initvertex$ exactly once (line \ref{rrw_alg:init_goal_test}).
Doing otherwise is clearly unnecessary.
This approach also ensures that the algorithm performs exactly one more goal test than successor generation steps, since every successor generation step (line \ref{rrw_alg:successor_gen}) is immediately followed by a goal test (line \ref{rrw_alg:goal_test}).

Clearly, if the maximum depth $t$ is less than the goal level, then constant-depth RRW will not terminate. 
Thus, we assume a reasonable ``guess" of a lower bound on the correct depth can be made.
That is, we assume that $t = e \goallevel$ where $e \geq 1$ and $e \goallevel$ is an non-negative integer.
We refer to $e$ as the \emph{depth error}, and denote the random variable for the number of goal tests by a constant-depth RRW with depth error $e$ on search task $\mathcal{S}$ as $\rrwrv_e(\mathcal{S})$. 
Note that for simplicity, we assume that every vertex with a level less than $e \goallevel$ has at least one successor (\textit{ie.} a random walk never terminates early).

\section{General Expected Runtime Analysis}

In this section, we characterize the expected runtime of BrFS and RRW in general settings. The first result is for BrFS when the goal vertices are uniformly distributed at the goal level.
This theorem will equivalently hold if random tiebreaking is used to order the vertices for expansion.

\begin{theorem} \label{thm:brfs_exp}
If $\mathcal{S}$ has $N_{\mathcal{O}}$ unique vertices with a level less than $\goallevel$, and $\numgoals \geq 1$ goal vertices uniformly distributed amongst the $\goallevelsize \geq \numgoals$ unique vertices at the goal level, then
\begin{align*}
    \expect[\bfsrv(\mathcal{S})] = N_{\mathcal{O}} + (\goallevelsize + 1)/(\numgoals + 1)
\end{align*}
\end{theorem}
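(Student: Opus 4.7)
The plan is to decompose $\bfsrv(\mathcal{S})$ into two additive pieces: the goal tests performed on vertices with level strictly less than $\goallevel$, and those performed on vertices at level $\goallevel$. Because $\goallevel$ is by definition the minimum level of any goal vertex, no vertex with smaller level lies in $\goalset$, so every one of the $N_{\mathcal{O}}$ such vertices is expanded and fails its test. Since BrFS expands strictly in level order and only terminates after a successful test, this deterministic contribution is exactly $N_{\mathcal{O}}$, and all randomness is isolated in the goal-level contribution.

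Next, I would identify the number of goal-level tests with the position of the first goal in a uniformly random ordering of the $\goallevelsize$ vertices at that level, of which $\numgoals$ are goals. The hypothesis that the $\numgoals$ goals are uniformly distributed among the $\goallevelsize$ vertices (equivalently, that ties at the goal level are broken uniformly at random) ensures that the order in which BrFS considers the $\goallevelsize$ goal-level vertices induces a uniformly random permutation on the goal/non-goal labels, and the number of goal tests at this level is just the index of the first goal in that ordering.

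The core computational step is then to show that this expected first-goal position equals $(\goallevelsize+1)/(\numgoals+1)$. I would use an indicator-variable argument: for each of the $\goallevelsize - \numgoals$ non-goal vertices $u$, let $X_u = 1$ iff $u$ is tested before every goal. Restricting the permutation to the $\numgoals+1$ elements $\{u, g_1, \ldots, g_{\numgoals}\}$, symmetry gives $\prob[X_u = 1] = 1/(\numgoals+1)$. The number of goal-level tests is $1 + \sum_u X_u$ (one for the first goal itself, plus one for each non-goal tested before it), so linearity of expectation gives
\begin{align*}
    1 + \frac{\goallevelsize - \numgoals}{\numgoals + 1} = \frac{\goallevelsize + 1}{\numgoals + 1}.
\end{align*}
Adding the $N_{\mathcal{O}}$ sub-goal-level tests yields the claimed formula.

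I do not expect a serious obstacle; the only subtlety worth spelling out is the equivalence between ``goals uniformly distributed among the $\goallevelsize$ vertices'' and ``uniformly random ordering of the goal-level vertices by BrFS,'' which I would justify briefly by noting that both distributions induce the same law on the indicator pattern of goal positions in the expansion order, and it is only this pattern that determines the index of the first goal found.
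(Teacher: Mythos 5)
Your proof is correct and follows essentially the same route as the paper's: the same decomposition into the deterministic $N_{\mathcal{O}}$ tests below the goal level plus the random position of the first goal at level $\goallevel$, the same indicator variables over the $\goallevelsize - \numgoals$ non-goal vertices with probability $1/(\numgoals+1)$ by a symmetry/counting argument, and the same application of linearity of expectation. If anything, yours is marginally cleaner, since the paper asserts the indicators are independent (which is not true and not needed), whereas you correctly rely only on linearity of expectation.
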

\begin{proof}
Let $X$ be the number of goal tests that BrFS performs on vertices at level $\goallevel$.
Since BrFS does not test any vertices with a level deeper than $\goallevel$, $\expect[\bfsrv(\mathcal{S})] = \expect[N_{\mathcal{O}} + X] = N_{\mathcal{O}} + \expect[X]$ since $N_{\mathcal{O}}$ is a constant.

Note, because the goal vertices are uniformly distributed, $\expect[X]$ is equivalent to the expected number of selections needed when randomly picking vertices from the goal level without replacement, until one of the $\numgoals$ vertices is picked. 
Let $v_i$ be any one of the $(\goallevelsize - \numgoals)$ non-goal vertices at level $\goallevel$, and let $Z_i$ be an indicator random variable for the event that $v_i$ is picked before any one of the $\numgoals$ goal vertices.
Therefore $\expect[X] = \expect[Z_1 + ... + Z_{\goallevelsize - \numgoals}] + 1$, since $\expect[X]$ is the number of non-goal vertices tested plus one for the final goal vertex tested.

Now notice that $\prob[Z_i] = 1 / (g + 1)$ since there are $(g + 1)!$ ways of ordering the $(g + 1)$ vertices in the set containing $v_i$ and the $g$ goal vertices, and $g!$ of these orderings start with $v_i$.
Importantly, this probability does not change when some other non-goal vertex $v_j \neq v_i$ is picked.
Therefore, the $Z_i$'s are independent and so
\begin{align}
    \expect[X] & = 1 + \expect[\sum_{i = 1}^{\goallevelsize - \numgoals} Z_i] = 1 + \sum_{i = 1}^{\goallevelsize - \numgoals} \expect[Z_i]\\
    & = 1 + (\goallevelsize - \numgoals) / (\numgoals + 1) = (\goallevelsize + 1) / (\numgoals + 1)\label{brfs_ex_proof:ind}
\end{align}
Line \ref{brfs_ex_proof:ind} holds since the $Z_i$ are indicator variables and so $\expect[Z_i]=\prob[Z_i]$, and since we are summing over $(\goallevelsize - \numgoals)$ of them with the same expectation.
Adding this to $N_{\mathcal{O}}$ yields the desired result.
\end{proof}

We now consider the runtime of constant depth RRW.
For this result, we assume that each random walk has an independent probability of $0 \leq s \leq 1$ of reaching a goal vertex.
We call $s$ the \emph{success probability}.

\begin{theorem} \label{thm:rrw_exp}
If $\mathcal{S}$ is a search task with goal level $\goallevel$ such that $s >0 $ and all $\numgoals$ goal vertices are at level $\goallevel$, then 
\begin{align*}
    \expect[\rrwrv_e(\mathcal{S})] = \frac{e \goallevel}{s} - (e - 1)\goallevel + 1
\end{align*}
\end{theorem}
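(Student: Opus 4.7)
The plan is to decompose the total count of goal tests into three contributions: the single initial test on $\initvertex$ performed at line \ref{rrw_alg:init_goal_test}, the tests performed by the sequence of unsuccessful walks that each run all the way to the maximum depth, and the tests performed by the single final successful walk. Since the walks are i.i.d.\ with success probability $s$, linearity of expectation will reduce everything to the expected number of walks performed.

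More concretely, I would let $N$ denote the random number of walks executed by the algorithm. Each walk independently succeeds with probability $s$, and the algorithm halts exactly at the first success, so $N$ is geometrically distributed with $\expect[N] = 1/s$. Each of the first $N - 1$ walks runs to the full depth $t = e\goallevel$ without finding a goal, and so contributes exactly $e\goallevel$ goal tests (invoking the preliminary observation that inside a walk, every successor generation is immediately followed by exactly one goal test). Combined with the single initial test on $\initvertex$, this gives a total count of $1 + (N-1)e\goallevel + (\text{tests on successful walk})$.

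The crucial remaining step is to argue that the successful walk contributes \emph{exactly} $\goallevel$ goal tests. The lower bound is immediate: since every goal lies at level $\goallevel$, and the level of a vertex is its shortest-path distance from $\initvertex$, no walk can reach a goal in fewer than $\goallevel$ steps. The matching upper bound is the subtle part: in the tree-structured setting used elsewhere in the paper, a walk of $k$ steps lands at a vertex of level exactly $k$, so the first goal encounter necessarily occurs at step $\goallevel$. Putting this together, the total number of goal tests is $1 + (N-1)e\goallevel + \goallevel$, and taking expectations gives
\[
1 + \left(\tfrac{1}{s} - 1\right)e\goallevel + \goallevel \;=\; \frac{e\goallevel}{s} - (e-1)\goallevel + 1,
\]
as required. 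I expect the main obstacle to be precisely the upper-bound argument for the length of the successful walk: the geometric-sum bookkeeping is routine, but cleanly justifying the ``exactly $\goallevel$ steps'' claim (rather than just asserting tree structure) is where the proof needs the most care.
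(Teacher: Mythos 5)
Your proof is correct and follows essentially the same route as the paper: decompose the total count as $1 + (Y-1)e\goallevel + \goallevel$ where $Y$ is the (geometric) number of walks with $\expect[Y] = 1/s$, then apply linearity of expectation. The subtlety you flag --- that the successful walk contributes \emph{exactly} $\goallevel$ tests --- is likewise asserted rather than argued in the paper's proof, which simply states that the goal is found at depth $\goallevel$ on the last walk.
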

\begin{proof}
Let $Y$ be the random variable for the \emph{number of random walks} it takes to find a goal (\textit{ie.} the goal is first seen on random walk $Y$).
Notice that
\begin{align}
    \expect[\rrwrv_e(\mathcal{S})]  &  = \expect[e \goallevel (Y - 1) + \goallevel + 1] \label{rrw_s:init}\\
            & = e \goallevel \expect[Y] - (e - 1) \goallevel + 1 \label{rrw_s:decompose}
\end{align}
Line \ref{rrw_s:init} holds because the first $Y$ random walks all perform $e \goallevel$ goal tests before restarting and exactly $\goallevel$ tests on the last random walk (when it finds a goal at depth $\goallevel$).
The added $1$ comes from the single goal test of $\initvertex$.

Since every random walk has an independent and identically distributed success probability of $s$, $Y$ follows the well-known geometric distribution.
Thus, $E[Y] = 1/s$.
Substituting this into Line \ref{rrw_s:decompose} gives the desired result.
\end{proof}

\section{Comparing BrFS and RRW on Trees}

We now consider the performance of BrFS and RRW on directed trees with a constant branching factor. 
We begin by using Theorems \ref{thm:brfs_exp} and \ref{thm:rrw_exp} to find more specific expressions for the expected number of goal tests in this case.
\begin{corollary} \label{cor:brfs_trees}
If $\mathcal{S}_\mathcal{T}$ is a search task on a directed tree with constant branching factor $\bfactor \geq 2$ and $\numgoals \geq 1$ goals uniformly distributed at goal level $\goallevel$, then 
\begin{align*}
    \expect[\bfsrv(\mathcal{S}_\mathcal{T})] = (\bfactor^{\goallevel} - 1)/(\bfactor - 1) + (\bfactor^{\goallevel} + 1) / (\numgoals + 1) 
\end{align*}
\end{corollary}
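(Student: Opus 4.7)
The plan is to derive this as a direct specialization of Theorem \ref{thm:brfs_exp} by computing the two quantities $N_{\mathcal{O}}$ and $\goallevelsize$ explicitly for a directed tree with constant branching factor $\bfactor$. Theorem \ref{thm:brfs_exp} already gives the expected number of goal tests as $N_{\mathcal{O}} + (\goallevelsize + 1)/(\numgoals + 1)$ under exactly the hypotheses assumed here (goals uniformly distributed at the goal level), so no new probabilistic reasoning is needed.

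First, I would observe that in a directed tree with constant branching factor $\bfactor$ rooted at $\initvertex$, each vertex at level $i$ has exactly $\bfactor$ distinct children at level $i+1$, and no vertex is shared between parents (since it is a tree). Hence, by an easy induction on $i$, the number of vertices at level $i$ is exactly $\bfactor^i$. In particular, $\goallevelsize = \bfactor^{\goallevel}$.

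Next I would sum the level sizes to get the count of vertices strictly above the goal level:
\begin{align*}
N_{\mathcal{O}} \;=\; \sum_{i=0}^{\goallevel - 1} \bfactor^i \;=\; \frac{\bfactor^{\goallevel} - 1}{\bfactor - 1},
\end{align*}
using the standard geometric-series identity, which is valid since $\bfactor \geq 2$ so $\bfactor - 1 \neq 0$. Substituting $N_{\mathcal{O}} = (\bfactor^{\goallevel} - 1)/(\bfactor - 1)$ and $\goallevelsize = \bfactor^{\goallevel}$ into the formula from Theorem \ref{thm:brfs_exp} yields exactly the claimed expression.

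There is no substantive obstacle here; the only thing to be careful about is verifying that the hypotheses of Theorem \ref{thm:brfs_exp} are met, namely that $\goallevelsize \geq \numgoals$. This follows immediately from the fact that the $\numgoals$ goals are, by assumption, distinct vertices drawn from the $\bfactor^{\goallevel}$ vertices at the goal level, so $\numgoals \leq \bfactor^{\goallevel} = \goallevelsize$. With that noted, the corollary is just a substitution.
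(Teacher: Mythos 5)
Your proposal is correct and matches the paper's own argument: both derive the result by substituting $\goallevelsize = \bfactor^{\goallevel}$ and $N_{\mathcal{O}} = \sum_{i=0}^{\goallevel-1}\bfactor^i = (\bfactor^{\goallevel}-1)/(\bfactor-1)$ into Theorem \ref{thm:brfs_exp}. Your additional check that $\numgoals \leq \goallevelsize$ is a small, harmless bit of extra care the paper leaves implicit.
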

This follows since there are $\bfactor^\ell$ vertices at every level $\ell \geq 0$ of such a tree.
Thus, we can apply Theorem \ref{thm:brfs_exp} using  $\goallevelsize = \bfactor^{\goallevel}$ and $N_{\mathcal{O}} = \bfactor^0 + \bfactor^1 + ... \bfactor^{\goallevel - 1} = (\bfactor^{\goallevel} - 1)/(\bfactor - 1)$, which follows from the formula for the sum of a geometric series.


Next, we consider RRW on such a tree:
\begin{corollary} \label{cor:rrw_trees}
Let $\mathcal{S}_\mathcal{T}$ be a search task on a directed tree with constant branching factor $\bfactor \geq 2$ and $\numgoals \geq 1$ goals all at the goal level $\goallevel$.
If RRW samples edges uniformly at each step, then 
\begin{align*}
    \expect[\rrwrv_e(\mathcal{S}_\mathcal{T})] = \frac{e \goallevel \bfactor^{\goallevel} } {\numgoals} - (e - 1)\goallevel + 1
\end{align*}
\end{corollary}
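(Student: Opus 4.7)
The plan is to reduce this directly to Theorem \ref{thm:rrw_exp} by computing the success probability $s$ of a single random walk on the tree, and then substituting into the formula from that theorem. The main work is therefore a counting argument that shows $s = \numgoals / \bfactor^{\goallevel}$.

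First, I would observe that on a directed tree, every vertex is reachable from $\initvertex$ by a unique sequence of edges, and a random walk of $k$ steps from the root lands at some vertex at level exactly $k$. Since all $\numgoals$ goal vertices lie at level $\goallevel$, the walk can encounter a goal only at step $\goallevel$ (the walk might continue to depth $e\goallevel$, but by the algorithm it terminates as soon as a goal is found, and no vertex above or below level $\goallevel$ is a goal). Thus the success probability $s$ equals the probability that the random walk is at a goal vertex after exactly $\goallevel$ sampling steps.

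Next, I would compute this probability. At each step the algorithm samples uniformly among the $\bfactor$ successors, so the probability of following any particular edge is $1/\bfactor$, and hence the probability of reaching any fixed vertex at level $\goallevel$ is $(1/\bfactor)^{\goallevel}$. Because the tree structure guarantees that the events ``the walk ends at goal $i$ after $\goallevel$ steps'' are pairwise disjoint across the $\numgoals$ distinct goals, we can sum probabilities to get
\begin{align*}
    s = \numgoals \cdot (1/\bfactor)^{\goallevel} = \numgoals / \bfactor^{\goallevel}.
\end{align*}
The assumption $\numgoals \geq 1$ guarantees $s > 0$, so Theorem \ref{thm:rrw_exp} applies.

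Finally, substituting $s = \numgoals/\bfactor^{\goallevel}$ into the expression from Theorem \ref{thm:rrw_exp} gives
\begin{align*}
    \expect[\rrwrv_e(\mathcal{S}_\mathcal{T})] = \frac{e \goallevel}{\numgoals / \bfactor^{\goallevel}} - (e - 1)\goallevel + 1 = \frac{e \goallevel \bfactor^{\goallevel}}{\numgoals} - (e - 1)\goallevel + 1,
\end{align*}
as desired. There is no real obstacle here beyond correctly justifying the disjointness step, which relies entirely on the tree property; on a general graph multiple root-to-vertex paths would exist and the argument would need to account for overlap.
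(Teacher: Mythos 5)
Your proposal is correct and follows essentially the same route as the paper: compute the single-walk success probability $s = \numgoals / \bfactor^{\goallevel}$ (the paper justifies this by noting all vertices at the goal level are equally likely to be visited, while you spell out the per-vertex probability $(1/\bfactor)^{\goallevel}$ and the disjointness of the goal events) and substitute into Theorem \ref{thm:rrw_exp}. Your added detail on why the tree property makes the events disjoint is a harmless elaboration of the same argument.
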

This follows immediately from Theorem \ref{thm:rrw_exp} since the success probability is $\numgoals / \bfactor^{\goallevel}$ because all vertices at the goal level are equally likely to be visited on any random walk.

\begin{figure*}[t!]
 	\centering
 	\begin{subfigure}[t]{0.3\textwidth}
 		\centering
 		\includegraphics[width=\linewidth]{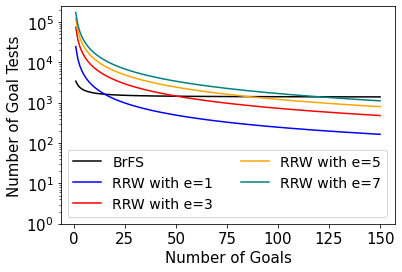} 
        \caption{Expected Goal Tests}\label{fig:goal_behaviour}
 	\end{subfigure}
 	\hfill
 	\begin{subfigure}[t]{0.3\textwidth}
 		\centering
 		\includegraphics[width=\linewidth]{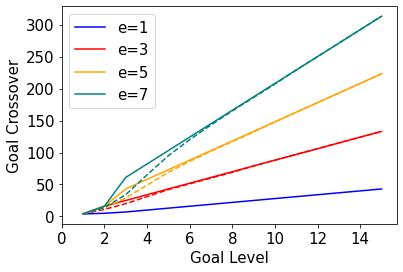} 
        \caption{Goal Level vs. Goal Crossover}\label{fig:cutoff_behaviour}
 	\end{subfigure}
 	\hfill
 	\begin{subfigure}[t]{0.3\textwidth}
 		\centering
 		\includegraphics[width=\linewidth]{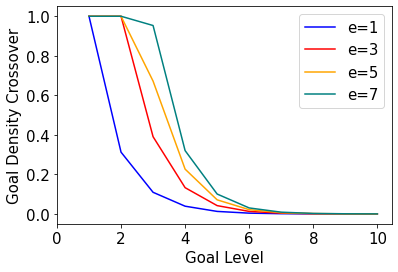} 
        \caption{Goal Level vs. Goal Density Crossover}\label{fig:density_behaviour}
 	\end{subfigure}
 	\caption{Comparing BrFS and RRW with different depth errors ($e$) on a directed tree with a branching factor of 4. Figure \ref{fig:goal_behaviour} shows the expected goal tests when the goal depth is 6 for different numbers of goals. Figure \ref{fig:cutoff_behaviour} shows how the crossover point changes at different goal levels. The crossover point as computed empirically is shown as a dashed line, while the bound from Theorem \ref{thm:brfs_vs_rrw} is a solid line. Figure \ref{fig:density_behaviour} shows how the goal density crossover changes with the goal level.}
\end{figure*}

Figure \ref{fig:goal_behaviour} uses these corollaries to show the expected number of goal tests for BrFS and RRW on a tree with $\bfactor=4$, $\goallevel=6$, and different numbers of goals.
BrFS performs significantly fewer goal tests for a small number of goals, but RRW can overtake BrFS quite quickly depending on the error value (note that there are 4096 vertices at the goal level).
We refer to the number of goals at which RRW matches or surpasses BrFS as the \emph{crossover point}.
The next theorem shows that the crossover point is linear in each of the branching factor, goal depth, and depth error.
The theorem only covers the case where the $\goallevel \geq 2$ and $e \goallevel > 2$. 
This requires that either the goal level $\goallevel > 2$ or the random walk depth error is $e > 1$.
We discuss the remaining cases after the theorem.
\begin{theorem} \label{thm:brfs_vs_rrw}
    Let $\mathcal{S}_\mathcal{T}$ be a search task on a directed tree with constant branching factor $\bfactor \geq 2$, $\goallevel \geq 2$, and $\numgoals \geq 1$ goals all uniformly distributed at the goal level.
    If RRW samples edges uniformly at each step and $e \goallevel > 2$, then $\expect[\bfsrv(\mathcal{S}_\mathcal{T})] \geq \expect[\rrwrv(\mathcal{S}_\mathcal{T})]$ if either the following holds:
    \begin{enumerate}
        \item $\numgoals = \bfactor^{\goallevel}$
        \item $\numgoals < \bfactor^{\goallevel}$ and $\numgoals \geq (e \goallevel - 1) (\bfactor - 1) + 1$ \label{bfs_vs_rrw_line:goal_bound}
    \end{enumerate}
\end{theorem}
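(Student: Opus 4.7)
The proof naturally splits along the two conditions. For condition 1 ($\numgoals = \bfactor^{\goallevel}$), I would substitute $\numgoals = N := \bfactor^{\goallevel}$ into Corollaries 4.1 and 4.2 to obtain $\expect[\bfsrv] - \expect[\rrwrv_e] = (N-1)/(\bfactor-1) - \goallevel = \sum_{i=0}^{\goallevel - 1} \bfactor^i - \goallevel$, which is non-negative since each of the $\goallevel$ terms in the geometric sum is at least $1$ when $\bfactor \geq 2$.

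For condition 2 ($\numgoals < N$ and $\numgoals \geq g_0 := (e\goallevel - 1)(\bfactor - 1) + 1$), my plan is to view $f(\numgoals) := \expect[\bfsrv] - \expect[\rrwrv_e]$ as a real-valued function of $\numgoals \geq 1$ and establish two sub-claims: (a) $f$ is strictly increasing, and (b) $f(g_0) \geq 0$. Together these yield $f(\numgoals) \geq f(g_0) \geq 0$ on the relevant range. For (a), I would differentiate to get $f'(\numgoals) = e\goallevel N/\numgoals^2 - (N+1)/(\numgoals+1)^2$ and exploit $e\goallevel \geq 3$ (since $e\goallevel > 2$ is a positive integer by the depth error assumption) to bound $e\goallevel N(\numgoals+1)^2 \geq 3N(\numgoals+1)^2 > (N+1)\numgoals^2$, which reduces to $(2N-1)\numgoals^2 + 6N\numgoals + 3N > 0$.

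Claim (b) is the main obstacle. The key identity is that $g_0 - 1 = (e\goallevel - 1)(\bfactor - 1)$ exactly, so $(N-1)/(\bfactor - 1) = (N-1)(e\goallevel - 1)/(g_0 - 1)$. Substituting this into $f(g_0)$ and clearing denominators by multiplying through by $g_0(g_0^2 - 1) > 0$, the target reduces (writing $k := e\goallevel$) to
\begin{align*}
(N - g_0)[(k-2)g_0 + k] + g_0(g_0^2 - 1)(k - \goallevel - 1) \geq 0.
\end{align*}
I would then split on the value of $e$. When $e > 1$, integrality of $e\goallevel$ forces $k \geq \goallevel + 1$, so both summands are non-negative (using $g_0 < N$ and $k \geq 3$). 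When $e = 1$, the hypothesis $e\goallevel > 2$ forces $\goallevel \geq 3$, and the inequality becomes $(N - g_0)[(\goallevel - 2)g_0 + \goallevel] \geq g_0(g_0^2 - 1)$.

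The $e = 1$ subcase is the tightest step and the main source of computational risk. My plan is to substitute $s = \goallevel - 1$ and $t = \bfactor - 1$, expand $N = (t+1)^{s+1}$ via the binomial theorem, and verify that every coefficient of the resulting polynomial in $t$ is non-negative for $s \geq 2$. Concretely, the $t^0$ and $t^1$ coefficients vanish, the $t^2$ coefficient works out to $s^2 - s - 1 \geq 1$, the $t^3$ coefficient simplifies to $s(5s^2 - 6s - 5)/6 \geq 1$, and each remaining coefficient has the form $2\binom{s+1}{i} + (s-1)\binom{s+1}{i-1}$, which is manifestly non-negative. This bookkeeping is mechanical but must be done carefully; once completed, combining it with the monotonicity argument closes condition 2.
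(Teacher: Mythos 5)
Your proposal is correct, and your Case 1 coincides with the paper's argument, but your handling of Case 2 takes a genuinely different route. The paper clears denominators by multiplying $x = \expect[\bfsrv(\mathcal{S}_\mathcal{T})] - \expect[\rrwrv(\mathcal{S}_\mathcal{T})]$ by $(\bfactor-1)(\numgoals+1)\numgoals$, substitutes a relaxation of the lower bound of Case 2 (namely $e\goallevel(\bfactor-1) \leq \numgoals + \bfactor + 2$) directly into the resulting polynomial, and in one pass bounds the result below by $\bfactor^{\goallevel}(\bfactor-1) - \bfactor^{\goallevel} \geq 0$, uniformly over all admissible $\numgoals$; no calculus and no boundary analysis is needed. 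You instead prove that the difference is increasing in $\numgoals$ (via the derivative and $e\goallevel \geq 3$, which does follow from $e\goallevel > 2$ and integrality) and then verify non-negativity only at the threshold $g_0 = (e\goallevel-1)(\bfactor-1)+1$, which concentrates all the difficulty in the boundary case. I checked your reduction to $(N-g_0)[(k-2)g_0+k] + g_0(g_0^2-1)(k-\goallevel-1) \geq 0$ and your $e=1$ coefficient computation (the $t^0,t^1$ terms vanish, the $t^2$ coefficient is $s^2-s-1$, the $t^3$ coefficient is $s(5s^2-6s-5)/6$, and the higher coefficients are $2\binom{s+1}{i}+(s-1)\binom{s+1}{i-1}$); all are correct for $s \geq 2$, $t \geq 1$, so the argument closes. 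The trade-off: the paper's one-shot algebraic estimate is shorter and entirely elementary, while your threshold-plus-monotonicity decomposition is sharper in spirit --- it isolates exactly where the bound is tight (consistent with the near-exactness visible in Figure 1b) and the monotonicity lemma would be reusable for locating the true crossover --- at the cost of a considerably more delicate binomial bookkeeping step.
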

\begin{proof}
Let $x$ denote $\expect[\bfsrv(\mathcal{S}_\mathcal{T})] - \expect[\rrwrv(\mathcal{S}_\mathcal{T})]$.
By Corollaries \ref{cor:brfs_trees} and \ref{cor:rrw_trees}
\begin{align}
    x = \frac{\bfactor^{\goallevel} - 1}{\bfactor - 1} + \frac{\bfactor^{\goallevel} + 1}{\numgoals + 1} - \frac{e \goallevel \bfactor^{\goallevel}}{\numgoals} + (e - 1)\goallevel - 1 \label{bfs_vs_rrw_line:x_def}
\end{align}
We will now show $x\geq 0$ in case 1, when $\numgoals = \bfactor^{\goallevel}$. Notice that this simplifies line \ref{bfs_vs_rrw_line:x_def} to $x = (\bfactor^{\goallevel} - 1)/(\bfactor - 1) - \goallevel$.
Since $\bfactor \geq 2$ and $\goallevel \geq 2$, $x$ is clearly positive here.

Let us now consider Case \ref{bfs_vs_rrw_line:goal_bound}.
For readability, we let $x = y + z$, where $z = (e - 1)\goallevel - 1$, and $y$ is the remaining terms in line \ref{bfs_vs_rrw_line:x_def}.
Since $x \geq 0$ if and only if $C x \geq 0$ for some positive constant $C$, we can show the required statement by showing $x' \geq 0$ where $x' = cx$ and $c = (\bfactor - 1)(\numgoals + 1) \numgoals$.
This is done to avoid the denominators.
Similarly, we define $y' = c y$ and $z' = c z$ and derive the following:
\begin{align}
    y' & = (\bfactor^{\goallevel} - 1)(\numgoals + 1) \numgoals + (\bfactor^{\goallevel} + 1)(\bfactor - 1) \numgoals \\ \nonumber
    &~~~~~~ - e \goallevel \bfactor^{\goallevel} (\bfactor - 1)(\numgoals + 1)\\
    & = \bfactor^{\goallevel} [(\numgoals + 1) \numgoals + (\bfactor - 1) \numgoals - e \goallevel (\bfactor - 1)(\numgoals + 1)] + v \label{bfs_vs_rrw_line:factor}
\end{align}
where $v = (\bfactor - 1) \numgoals - (\numgoals + 1) \numgoals$. Here, we have just separated and factored out the terms with $\bfactor^{\goallevel}$ in them.
Now notice the lower bound in Case \ref{bfs_vs_rrw_line:goal_bound} is equivalent to $ed(b - 1) \leq \numgoals + \bfactor + 2$.
By substituting this into Line \ref{bfs_vs_rrw_line:factor}, we get 

\begin{align}
     y' & \geq \bfactor^{\goallevel} [(\numgoals + 1) \numgoals + (\bfactor - 1) \numgoals - (\numgoals + \bfactor + 2) (\numgoals + 1)] + v \label{bfs_vs_rrw_line:sub_and_algebra}\\
     & \geq \bfactor^{\goallevel}(\numgoals - \bfactor - 2) + v \geq \bfactor^{\goallevel}(\bfactor - 1) + v
\end{align}
Line \ref{bfs_vs_rrw_line:sub_and_algebra} merely involves expanding and cancelling out terms.
The last line holds by the assumption that $e \goallevel  - 1 \geq 3$ which, alongside the lower bound in Case \ref{bfs_vs_rrw_line:goal_bound} implies that $g \geq 2\bfactor - 1$.

\begin{align}
    y' + z' & \geq \bfactor^{\goallevel}(\bfactor - 1) + (\bfactor - 1) \numgoals - (\numgoals + 1)\numgoals\\ \nonumber
    & ~~~~~~ + (\bfactor - 1)(\numgoals + 1) \numgoals [(e - 1)\goallevel - 1]\\
    & \geq \bfactor^{\goallevel}(\bfactor - 1) + \numgoals[(\bfactor - 1)(\numgoals + 1)(e - 1)\goallevel - 1] \label{bfs_vs_rrw_line:v_algebra}\\ 
    & \geq \bfactor^{\goallevel}(\bfactor - 1) - \numgoals  \label{bfs_vs_rrw_line:g_lower} \\
    & \geq \bfactor^{\goallevel}(\bfactor - 1) - \bfactor^{\goallevel} \label{bfs_vs_rrw_line:y_plus_z_bound}
\end{align}
Line \ref{bfs_vs_rrw_line:v_algebra} simply involves expanding and cancelling out terms.
Line \ref{bfs_vs_rrw_line:g_lower} holds because $(\bfactor - 1)(\numgoals + 1)(e - 1) \geq 0$ since $\bfactor \geq 2$, $\numgoals \geq 1$, and $e \geq 1$.
The last line holds since $\numgoals < \bfactor^{\goallevel}$.

Since $\bfactor \geq 2$, line \ref{bfs_vs_rrw_line:y_plus_z_bound} implies that $x' = y' + z' \geq 0$, which in turn implies $x \geq 0$, which completes the proof.
\end{proof}

Let us now consider cases not handled by Theorem \ref{thm:brfs_vs_rrw}. 
When $\goallevel = 1$, it can be shown that $\expect[\bfsrv(\mathcal{S}_\mathcal{T})] < \expect[\rrwrv(\mathcal{S}_\mathcal{T})]$ for any $\numgoals \leq \bfactor^{\goallevel} = \bfactor$, and the two expectations are exactly equal when $\numgoals = \bfactor$.
If $\goallevel = 2$ and $e = 1$, the situation is almost identical to that in Theorem \ref{thm:brfs_vs_rrw} except the lower bound on $g$ in Case \ref{bfs_vs_rrw_line:goal_bound} is exactly one higher than in Theorem \ref{thm:brfs_vs_rrw}.
Namely, the bound is
$\numgoals \geq (e \goallevel - 1) (\bfactor - 1) + 2$.

Figure \ref{fig:cutoff_behaviour} shows how the goal crossover changes with goal level and depth error. 
The bound from Theorem \ref{thm:brfs_vs_rrw} is shown as a solid line and the actual crossover points  --- computed based on Corollaries \ref{cor:brfs_trees} and \ref{cor:rrw_trees} --- are shown as dashed lines.
The figure clearly depicts that the crossover increases linearly with the goal level and depth error as suggested by Theorem \ref{thm:brfs_vs_rrw}.
The figure also shows that the bound is quite accurate, as it only slightly over-estimates the true crossover for small goal levels.

Finally, we note that while the crossover increases linearly with the goal level, the density of goals needed at the goal level in order for RRW to outperform BrFS actually decreases with $\goallevel$.
For example, Figure \ref{fig:density_behaviour} shows the \emph{density crossover}, which we define as the crossover point (determined using the lower bound from Theorem \ref{thm:brfs_vs_rrw}) divided by the number of goals at the goal level (namely $\bfactor^{\goallevel}$).
The figure clearly shows that the required percentage of goals for RRW to be faster actually decreases with the goal level.

\section{Discussion and Future Work} \label{sec:discussion}

We have shown that BrFS performs fewer goal tests when the number of goals is very small, but RRW is faster in expectation as the number of goals grows even slightly.
However, RRW may outperform this result in practice for several reasons.
First, RRW can benefit from goals at levels between $\goallevel$ and $e \goallevel$ as these will increase the success probability and thereby decrease expected runtime.
BrFS cannot benefit from such goals in any way.

Second, while we have focused on the number of goal tests performed, the amount of time needed per goal test differs between the algorithms.
This is because BrFS has the additional overhead of maintaining the open and closed list (\textit{ie.} duplicate checking, adding and removing from the open list, etc.).
BrFS will also generate all children of a given vertex, not just a single one as RRW does.
While this overhead may just be constant time, that may often be enough to further decrease the crossover point when expected runtime and not number of goal tests are considered.
The open and closed lists can also have enormous memory requirements making BrFS impractical in certain situations.

In contrast, the duplicate detection performed by open and closed lists means BrFS is much better equipped to handle graphs with \emph{cycles} or \emph{transpositions}.
In practice, this effectively means RRW is working on a larger search tree.
For example, even if the search task is an undirected tree with constant branching factor $\bfactor$, the duplicate detection of BrFS means it will be able to prune the tree to one with a branching factor $\bfactor - 1$.
While this particular case is easy to handle with RRW, this algorithm will struggle relative to BrFS in a similar way to iterative deepening methods \cite{korf:ida_star} if the transpositions are more complex than that.
Along with our results in Theorem \ref{thm:brfs_vs_rrw}, we therefore expect RRW to have the advantage in large combinatorial state-spaces with a low number of transpositions and an increasing amount of goals, and BrFS to better handle cases with very few goals or many transpositions.
It may therefore be important to develop hybrid BrFS and RRW methods --- analogous to A* + IDA* \cite{bu:a_star_plus_ida_star} --- that get some of the advantages of both.
We leave such an investigation as future work.

We now review some related work and identify several avenues of future work.
Local best-first searches have been empirically shown to be useful for escaping local minima and plateaus in planning search systems \cite{xie:improved_local}.
\citeauthor{nakhost:rw_analysis} (\citeyear{nakhost:rw_analysis}) formally analyze the expected runtime of a single random walk and restarting random walks on classes of graphs characterized by the probability of getting closer or farther from a goal on every step.
Their model for restarting random walks assumed a constant restart probability at every step instead of a constant restart depth, and they also do not compare to BrFS.
Extending our results to their graph classes is left as future work.
\citeauthor{everitt:bfs_vs_dfs} (\citeyear{everitt:bfs_vs_dfs}) performed a similar analysis to ours when comparing BrFS and depth-first search (DFS) on bounded depth-trees.
They characterized the expected runtime of BrFS, but their formulation considers the case where every vertex at the goal level is equally likely to be a goal (see Proposition 3).
Our use of a fixed number of goals $\numgoals$ considerably simplifies the resulting expression for the expected runtime, and we also compare against RRW instead of DFS, which is not generally suitable for escaping local minima/plateaus.

Finally, non-constant restart policies have been introduced that do not rely on an initial guess of the goal depth.
These methods dynamically adjust the restart depth over time while still maintaining strong theoretical guarantees.
For example, the general policy of \citeauthor{luby:lv_restarts} (\citeyear{luby:lv_restarts}) has an expected runtime of $O(T \log T)$ on any search task $\mathcal{S}$, where $T$ is the runtime of the best possible restart policy for $\mathcal{T}$.
As on directed trees, the best possible policy is exactly the constant-depth one with $e=1$, we hope to extend our analysis to compare BrFS to this general policy.
We leave this as future work.

\bibliography{aaai24}

\end{document}